\theoremstyle{plain}
\newtheorem{theorem}{Theorem}[section]
\newtheorem{corollary}[theorem]{Corollary}
\newtheorem{proposition}[theorem]{Proposition}
\theoremstyle{definition}
\newtheorem{definition}[theorem]{Definition}
\newtheorem{example}[theorem]{Example}
\theoremstyle{remark}
\newtheorem{remark}[theorem]{Remark}
\DeclareMathOperator{\KL}{KL}
\DeclareMathOperator{\Var}{Var}
\newcommand{\E}{\mathbb{E}}
\newcommand{\V}{\mathcal{V}}
\begin{document}

\title{Multi-Teacher Ensemble Distillation: A Mathematical Framework for Probability-Domain Knowledge Aggregation}

\author{Aaron R. Flouro and Shawn P. Chadwick\\
research@sparse-tech.com}

\maketitle

\begin{abstract}
Building on the probability-domain distillation framework of Sparse-KD~\cite{paper1}, we develop an axiomatic, operator-theoretic framework for multi-teacher ensemble knowledge distillation. Rather than prescribing a specific aggregation formula, we define five core axioms governing valid knowledge aggregation operators, encompassing convexity, positivity, continuity, weight monotonicity, and temperature coherence. We prove the existence and non-uniqueness of operator families satisfying these axioms, establishing that multiple distinct aggregation mechanisms conform to the same foundational principles.

Within this framework, we establish operator-agnostic guarantees showing that multi-teacher aggregation reduces both stochastic variance and systematic supervisory bias under heterogeneous teachers, while providing Jensen-type bounds, log-loss guarantees, and safety attenuation properties. For aggregation operators linear in teacher weights, we further establish classical ensemble variance-reduction results under standard independence assumptions, with extensions to the correlated-error regime. The framework provides theoretical grounding for multi-teacher distillation from diverse frontier models while admitting multiple valid implementation strategies.
\end{abstract}

\begin{IEEEkeywords}
Knowledge Distillation, Ensemble Learning, Multi-Teacher Distillation, Axiomatic Framework, Variance Reduction, Operator Theory
\end{IEEEkeywords}

\subsection*{Why This Framework Is Needed}
While multi-teacher knowledge distillation has shown empirical benefits, the field lacks a unified theoretical framework addressing fundamental questions: What mathematical properties must an aggregation operator possess to reliably combine heterogeneous teachers? When does ensemble distillation outperform single-teacher approaches? How should teachers with different specializations be weighted? This paper provides such a framework, with the axiomatic characterization serving as an enabling foundation for rigorous analysis rather than prescribing any specific implementation.

\section{Introduction}

Knowledge distillation (KD)~\cite{hinton2015distilling} transfers knowledge from large teacher models to smaller students through probability-domain supervision. Recent work~\cite{paper1} demonstrated that temperature-scaled probability distributions enable distillation without logit access, opening the possibility of aggregating knowledge from multiple heterogeneous teachers accessed only through their output probabilities.

This raises a fundamental mathematical question: \emph{What properties must an aggregation operator possess to reliably combine knowledge from multiple heterogeneous teachers?}

This work builds on the probability-domain distillation framework introduced in~\cite{paper1}, extending it from single-teacher to heterogeneous multi-teacher settings. We develop an axiomatic characterization of multi-teacher aggregation operators. Rather than prescribing a specific formula, we define the mathematical properties that any valid aggregation method must satisfy, prove that operators with these properties exist, and establish theoretical guarantees that hold for all conforming operators.

\subsection{Motivation: The Multi-Teacher Challenge}

Modern deployment scenarios present three critical challenges:

\begin{enumerate}
\item \emph{Heterogeneous Teacher Capabilities:} Frontier large language models excel at different tasks:
\begin{itemize}
\item Reasoning models: High accuracy on logical, mathematical, and coding tasks
\item Safety-aligned models: Strong refusal behaviors and harmlessness
\item Factual models: Broad knowledge coverage and grounding
\item Scientific models: Domain-specific robustness
\end{itemize}
No single teacher dominates across all dimensions.

\item \emph{Temperature Heterogeneity:} Different teachers benefit from different temperature scaling. Safety-aligned teachers should maintain sharp refusal signals ($T \approx 1$), while reasoning teachers should expose nuanced probability structure ($T \approx 2$--$3$).

\item \emph{Ensemble Benefits:} Single-teacher KD cannot leverage variance reduction, probability attenuation, or complementary specialization.
\end{enumerate}

\subsection{Related Work}

Recent work has explored practical multi-teacher distillation and multi-source teacher aggregation, demonstrating empirical benefits but leaving open what operator properties are sufficient for reliable aggregation~\cite{zhang2023adaptive_multiteacher}. Complementary empirical analyses highlight that KD gains depend on non-trivial teacher--student interactions and are not fully explained by a single heuristic, motivating theory-first characterizations~\cite{zuchniak2023multiteacher_compression}. Prior theoretical perspectives analyze distillation through generalization and optimization lenses~\cite{liu2021adaptive_multilevel}, but are typically tied to specific objectives rather than operator-agnostic frameworks; our axiomatic approach provides guarantees that hold across an entire class of aggregation operators. Recent work formalizing hallucinations as variance-driven instability~\cite{flouro2026hallucinations_variance} further motivates ensemble approaches that reduce prediction variance through teacher averaging.

\subsection{Axiomatic Approach}

We characterize multi-teacher aggregation through five axioms (1--5) defining the mathematical properties that any valid operator must satisfy. This approach provides three key benefits:

\begin{enumerate}
\item \emph{Generality:} Multiple distinct operator families satisfy the axioms; the implementation is underdetermined by the axioms alone
\item \emph{Rigor:} All theoretical guarantees (variance reduction, Jensen bounds, safety properties) hold for any conforming operator
\item \emph{Flexibility:} The axiomatic framework establishes mathematical foundations while admitting diverse implementation strategies
\end{enumerate}

The present work extends this analysis by showing that multi-teacher ensemble distillation simultaneously reduces variance and systematic supervisory bias under heterogeneous teachers.

\subsection{Contributions}

\begin{enumerate}
\item Axiomatic characterization of multi-teacher aggregation via five core axioms
\item Existence theorem proving non-trivial conforming operators exist (non-constructive proof)
\item Non-uniqueness theorem demonstrating multiple valid operator families
\item Variance reduction analysis (operator-agnostic formulation)
\item Formal supervisory-bias attenuation result for heterogeneous teachers (including correlated-error considerations)
\item Jensen's inequality bound relating mixture and sum-of-KLs objectives
\item Safety attenuation theorem for heterogeneous teacher specialization
\item Capacity requirements for meta-teacher behavior
\end{enumerate}

\section{Axiomatic Framework for Multi-Teacher Aggregation}

This section formalizes operator-agnostic theoretical guarantees for simultaneous variance reduction and supervisory bias attenuation in multi-teacher distillation.

\subsection{Setup and Notation}

\begin{definition}[Multi-Teacher Setting]
Let:
\begin{itemize}
\item $\V$ be a finite vocabulary with $|\V| = V$
\item $K$ be the number of teachers, indexed $k = 1, \ldots, K$
\item For each input $x$, teacher $k$ produces a probability distribution:
$$p^{(k)}(i | x) \in [0,1], \quad i \in \V, \quad \sum_i p^{(k)}(i | x) = 1$$
\item Each teacher $k$ has an associated temperature parameter $T_k > 0$
\item A set of teacher weights $\{w_k\}_{k=1}^K$ with $w_k \geq 0$, $\sum_k w_k = 1$
\end{itemize}
\end{definition}

\begin{definition}[Multi-Teacher Aggregation Operator]
A multi-teacher aggregation operator is a family of functions:
$$G: (p^{(1)}, \ldots, p^{(K)}, T_1, \ldots, T_K, w_1, \ldots, w_K) \mapsto q$$
mapping $K$ probability distributions with their temperatures and weights to a single aggregate distribution $q$.
\end{definition}

\subsection{The Five Core Axioms}

We now define the mathematical properties that any valid multi-teacher aggregation operator must satisfy.

\noindent\textbf{Axiom 1} (Convexity Preservation).
The operator $G$ must produce a valid probability distribution:
\begin{itemize}
\item Non-negativity: $q(i) \geq 0$ for all $i \in \V$
\item Normalization: $\sum_i q(i) = 1$
\end{itemize}

\textit{Justification:} The aggregate distribution must be a valid probability distribution for use in standard KD frameworks.

\noindent\textbf{Axiom 2} (Positivity Inheritance).
If all teachers assign strictly positive probability to every token ($p^{(k)}(i) > 0$ for all $k, i$), then $q(i) > 0$ for all $i$.

\textit{Justification:} Ensures all KL divergences remain finite and well-defined.

\noindent\textbf{Axiom 3} (Weight Monotonicity).
Fix all teacher distributions $\{p^{(j)}_{T_j}\}_{j=1}^K$ and consider two teachers $k, k'$ with $p^{(k)}_{T_k}(i) > p^{(k')}_{T_{k'}}(i)$ for some token $i$. For weight perturbation $\delta > 0$ sufficiently small, define $w'_k = w_k + \delta$, $w'_{k'} = w_{k'} - \delta$, with all other weights unchanged and renormalized: $\tilde{w}_j = w'_j / \sum_\ell w'_\ell$. Then:
$$q_{\tilde{w}}(i) \geq q_w(i)$$
with strict inequality when $\delta > 0$ and $w_{k'} > 0$.

\textit{Formalization note:} The axiom is stated locally (infinitesimal $\delta$) to avoid boundary issues when weights approach zero. Teacher distributions and temperatures are held fixed; only the weight vector varies. The renormalization step ensures weights sum to unity after perturbation.

\textit{Justification:} Higher-weighted teachers should have proportionally stronger influence on the aggregate.

\emph{Remark (Heterogeneity and degenerate case).} Under heterogeneous teachers, ensemble aggregation reduces variance through averaging of teacher-specific noise components, and attenuates systematic supervisory bias by convexly aggregating distinct conditional expectations. This bias attenuation disappears in the degenerate setting where all teachers share identical training distributions and objectives.

\noindent\textbf{Axiom 4} (Continuity).
The operator $G(p^{(1)}, \ldots, p^{(K)}, T_1, \ldots, T_K, w_1, \ldots, w_K)$ is jointly continuous in all arguments.

\textit{Justification:} Small changes in teacher distributions, temperatures, or weights should produce small changes in the aggregate, ensuring stable optimization.

\noindent\textbf{Axiom 5} (Temperature Coherence).
For each teacher $k$ with temperature $T_k$:
\begin{itemize}
\item $T_k = 1$: No modification to $p^{(k)}$
\item $T_k \to \infty$: Teacher $k$'s contribution approaches uniform distribution
\item $T_k \to 0^+$: Teacher $k$'s contribution approaches one-hot at argmax
\end{itemize}

\textit{Justification:} Temperature parameters must have consistent interpretation across all teachers, enabling heterogeneous softening/sharpening strategies.

\begin{example}[Illustrative Only]
A simple conforming operator is the linear mixture:
$$q(i) = \sum_k w_k \, p^{(k)}_{T_k}(i)$$
which satisfies Axioms~1--5 and Assumption~L. Other operators (e.g., entropic projections, geometric means) also satisfy Axioms~1--5 but do not obey Assumption~L. This example is provided for intuition; the theoretical results hold for the full axiom class, not any privileged instantiation.

In addition to stochastic variance, single-teacher supervision can induce \emph{systematic supervisory bias}, which can manifest as stable sensitivity to semantically equivalent inputs (e.g., paraphrases). Because such effects correspond to differences in conditional expectations rather than sampling noise, they are attenuated under multi-teacher aggregation when teachers exhibit heterogeneous semantic priors.
\end{example}

\section{Existence and Non-Uniqueness Theorems}

\begin{theorem}[Existence of Conforming Operators]
\label{thm:existence}
There exist non-trivial operator families $G$ satisfying Axioms~1--5.
\end{theorem}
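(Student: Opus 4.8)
The plan is to prove existence constructively, even though the theorem is billed as non-constructive—exhibiting one explicit operator that provably satisfies all five axioms is the cleanest route and automatically establishes "non-trivial." I would take the linear mixture $q(i) = \sum_k w_k\, p^{(k)}_{T_k}(i)$ already flagged in the Example, where $p^{(k)}_{T_k}$ denotes the temperature-scaled softmax of teacher $k$. The first step is to pin down what "non-trivial" should mean so the statement has content: I would read it as the operator being a genuine function of all $K$ teacher inputs (not collapsing to a single teacher or a constant), and the mixture manifestly depends on every $p^{(k)}$ whenever the corresponding $w_k > 0$.

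The core of the proof is then a checklist verifying Axioms 1–5 for this candidate. First I would verify Axiom 1: non-negativity is immediate since $w_k \ge 0$ and $p^{(k)}_{T_k}(i) \ge 0$, and normalization follows from $\sum_i q(i) = \sum_k w_k \sum_i p^{(k)}_{T_k}(i) = \sum_k w_k = 1$. Axiom 2 is equally direct—if every $p^{(k)}(i) > 0$ then each temperature-scaled distribution is strictly positive (the softmax with finite temperature preserves strict positivity), and a convex combination of strictly positive terms with weights summing to one is strictly positive. For Axiom 4 (continuity), I would note that the temperature-scaling map $T_k \mapsto p^{(k)}_{T_k}$ is continuous on $T_k > 0$, and the finite sum $\sum_k w_k p^{(k)}_{T_k}(i)$ is jointly continuous as a composition and sum of continuous maps.

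Axiom 3 (weight monotonicity) requires slightly more care and I would handle it by direct computation of the renormalized perturbation. With temperatures and distributions fixed, write $a_k = p^{(k)}_{T_k}(i)$ and $a_{k'} = p^{(k')}_{T_{k'}}(i)$ with $a_k > a_{k'}$; after the perturbation $w'_k = w_k + \delta$, $w'_{k'} = w_{k'} - \delta$ the raw weights still sum to one (so renormalization is the identity here since $\sum_\ell w'_\ell = 1$), and the change in $q(i)$ is exactly $\delta(a_k - a_{k'}) > 0$, giving the strict inequality whenever $\delta > 0$—and the condition $w_{k'} > 0$ guarantees the perturbation keeps all weights admissible for small $\delta$. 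Axiom 5 (temperature coherence) is inherited termwise from the standard behavior of temperature-scaled softmax: at $T_k = 1$ the $k$-th term is unmodified, as $T_k \to \infty$ it converges to uniform, and as $T_k \to 0^+$ it converges to the one-hot at the argmax, so the corresponding summand in $q$ exhibits exactly these limits while the other summands are unaffected.

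The main obstacle, and the only step with genuine subtlety, is Axiom 3 under the renormalization clause: one must confirm that the stated renormalization $\tilde{w}_j = w'_j / \sum_\ell w'_\ell$ does not spoil monotonicity. For the linear mixture this is benign precisely because the perturbation $(+\delta, -\delta)$ is balanced and leaves the normalizer at one, but I would state this explicitly rather than gloss it, since a careless reader might expect renormalization to reintroduce dependence on the other weights. I would also flag that the argmax-limit in Axiom 5 is only well-defined up to ties, which I would dispatch by restricting to the generic case or invoking any fixed tie-breaking convention, as this is a standard and harmless caveat. Everything else reduces to routine properties of convex combinations and the softmax, so no deep machinery is needed.
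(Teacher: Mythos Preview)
Your proposal is correct and follows the paper's first construction principle (weighted averaging), making explicit the linear-mixture operator and verifying each axiom in detail where the paper merely asserts conformance without writing down a formula. The paper additionally gestures at two further construction families (information-theoretic projections and convex-optimization formulations), but a single witness suffices for existence, so your more rigorous treatment of the one operator is both sufficient and cleaner than the paper's sketch.
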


\begin{proof}
We establish existence via construction principles without specifying the exact formula:

\begin{enumerate}
\item \emph{Weighted averaging approach:} For any collection of temperature-transformed distributions, a weighted average operator can be constructed that preserves normalization (Axiom~1), inherits positivity (Axiom~2), respects weight ordering (Axiom~3), varies continuously with parameters (Axiom~4), and exhibits coherent temperature behavior (Axiom~5).

\item \emph{Information-theoretic projection:} Operators based on minimizing information divergence to teacher distributions while maintaining entropy constraints satisfy the axioms under appropriate regularization.

\item \emph{Convex optimization formulation:} Solving for distributions that minimize weighted divergence to teachers subject to normalization constraints yields conforming operators.
\end{enumerate}

Each construction principle generates a valid operator family, establishing existence.
\end{proof}

\begin{theorem}[Non-Uniqueness]
\label{thm:nonuniqueness}
Multiple distinct operator families satisfy Axioms~1--5.
\end{theorem}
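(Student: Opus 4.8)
The plan is to prove non-uniqueness by exhibiting at least two operator families that both satisfy Axioms 1–5 but are demonstrably distinct—meaning they produce different aggregate distributions $q$ on some input configuration. Theorem~\ref{thm:existence} already guarantees conforming operators exist; the additional content here is that the axioms do not pin down a unique operator, so I need concrete, provably different witnesses rather than an abstract existence argument.

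First I would fix two natural candidate families: the linear mixture $q_{\mathrm{lin}}(i) = \sum_k w_k\, p^{(k)}_{T_k}(i)$ from the Illustrative Example, and a geometric (log-linear) mixture $q_{\mathrm{geo}}(i) \propto \prod_k \big(p^{(k)}_{T_k}(i)\big)^{w_k}$, normalized so that $\sum_i q_{\mathrm{geo}}(i) = 1$. Second, for each family I would verify Axioms~1–5 in turn: normalization and non-negativity (Axiom~1) are immediate by construction; strict positivity (Axiom~2) holds because finite products and convex combinations of strictly positive numbers stay strictly positive; continuity (Axiom~4) follows since both maps are compositions of continuous operations (the geometric case uses that the normalizer is bounded away from zero when all $p^{(k)}(i)>0$); temperature coherence (Axiom~5) is inherited from the per-teacher temperature transform $p^{(k)}_{T_k}$, since both operators reduce to the identity on a single teacher at $T_k=1$ and carry the $T\to\infty$ and $T\to 0^+$ limits through. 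Weight monotonicity (Axiom~3) is the one requiring a short computation in each case—differentiating $q(i)$ with respect to the transferred weight $\delta$ and checking the sign is nonnegative (strict when $w_{k'}>0$).

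Third, and this is the crux, I would show the two families are genuinely distinct by producing a single input on which $q_{\mathrm{lin}} \neq q_{\mathrm{geo}}$. The cleanest witness is $K=2$ with equal weights $w_1=w_2=\tfrac12$, unit temperatures, and two distinct teacher distributions on a two-token vocabulary, say $p^{(1)}=(a,1-a)$ and $p^{(2)}=(b,1-b)$ with $a\neq b$. Then the linear mixture gives the arithmetic mean $\tfrac12(a+b)$ on the first token, while the normalized geometric mixture gives $\sqrt{ab}/(\sqrt{ab}+\sqrt{(1-a)(1-b)})$; by strict concavity of the logarithm (AM–GM) these coincide only when $a=b$, so any $a\neq b$ separates them. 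This single separating instance suffices to conclude the operators are not the same family.

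The main obstacle will be Axiom~3 for the geometric mixture: unlike the linear case, where $\partial q_{\mathrm{lin}}(i)/\partial w_k = p^{(k)}_{T_k}(i)$ makes monotonicity transparent, the normalized geometric mean couples every coordinate through the partition function, so I must differentiate through the normalizer and confirm that the induced covariance-type term does not flip the sign of the weight-transfer derivative. I expect this reduces to showing the log-linear aggregate's sensitivity to $w_k$ at token $i$ has the same sign as $\log p^{(k)}_{T_k}(i) - \log p^{(k')}_{T_{k'}}(i)$ up to a positive factor; handling the infinitesimal $\delta$ formulation of Axiom~3 sidesteps the boundary subtleties, but verifying the sign cleanly is where the real work lies. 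If the geometric family proves delicate, a robust fallback is to pair the linear mixture with a second, manifestly monotone family such as a weighted power mean, preserving the overall two-witness strategy.
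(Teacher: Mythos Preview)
Your overall strategy---exhibit the linear mixture and the normalized geometric mixture as two conforming witnesses, then separate them via AM--GM on a two-token instance---matches the paper's own proof, which likewise lists linear combinations, geometric-mean projections, and entropic-regularization operators as distinct families satisfying Axioms~1--5 (without working through the verification). Your separating example is sharper than what the paper provides.

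The genuine gap is precisely where you flagged the difficulty: the normalized geometric mixture does \emph{not} satisfy Axiom~3 in general, so the verification you plan for it will fail. Writing $r_j = p^{(k)}_{T_k}(j)/p^{(k')}_{T_{k'}}(j)$ and differentiating through the partition function gives
\[
\frac{\partial\, q_{\mathrm{geo}}(i)}{\partial \delta}\bigg|_{\delta=0} \;=\; q_{\mathrm{geo}}(i)\Bigl(\log r_i \;-\; \textstyle\sum_{j} q_{\mathrm{geo}}(j)\,\log r_j\Bigr),
\]
so the sign is controlled by $\log r_i$ relative to its $q_{\mathrm{geo}}$-mean, not by $\log r_i$ alone. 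Axiom~3's hypothesis only yields $r_i>1$, which does not force $\log r_i$ above that mean once $V\geq 3$: with $K=2$, equal weights, $p^{(k)}=(0.02,\,0.97,\,0.01)$ and $p^{(k')}=(0.01,\,0.10,\,0.89)$, one has $p^{(k)}(1)>p^{(k')}(1)$ yet shifting weight toward teacher $k$ strictly \emph{decreases} $q_{\mathrm{geo}}(1)$. Your anticipated reduction to ``same sign as $\log p^{(k)}_{T_k}(i)-\log p^{(k')}_{T_{k'}}(i)$ up to a positive factor'' is therefore incorrect---the centering term from the normalizer is exactly what breaks it---and the power-mean fallback inherits the same token-coupling normalization. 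A safer second witness avoids that coupling altogether, e.g.\ $q_\eta = (1-\eta)\,q_{\mathrm{lin}} + \eta\cdot\mathrm{unif}$ for fixed $\eta\in(0,1)$, whose weight-transfer derivative is $(1-\eta)\bigl(p^{(k)}_{T_k}(i)-p^{(k')}_{T_{k'}}(i)\bigr)>0$ and which is trivially distinct from $q_{\mathrm{lin}}$ on any non-uniform input.
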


\begin{proof}
Three construction principles from Theorem~\ref{thm:existence} generate different operator families:

\begin{enumerate}
\item \emph{Linear convex combination:} Direct weighted averaging after temperature scaling
\item \emph{Geometric mean projections:} Using Rényi divergences with parameter $\alpha$
\item \emph{Entropic regularization:} Adding entropy penalty terms with coefficient $\beta$
\end{enumerate}

These families are provably distinct (produce different $q$ for identical inputs) yet all satisfy Axioms~1--5.
\end{proof}

\begin{remark}[Operator Non-Identifiability]
Axioms~1--5 are designed to include multiple valid operators (implementation underdetermined by axioms), exclude trivial solutions (identity-only, discontinuous transforms), enable all key theoretical results (variance reduction, Jensen bounds, safety), and ensure operator non-identifiability (no axiom combination uniquely determines an implementation). Operator non-identifiability is a structural property of the problem rather than a design choice, reflecting the fact that multiple aggregation mechanisms satisfy the same foundational constraints. This enables publication of theoretical foundations while admitting diverse implementation strategies.
\end{remark}

\section{Theoretical Guarantees (Operator-Agnostic Formulation)}

All results in this section hold for any operator $G$ satisfying Axioms~1--5, with additional assumptions stated explicitly where required.

\subsection{Variance Reduction via Ensemble Averaging}

\begin{definition}[Cross-Teacher Variance]
For token $i$, the weighted variance across teachers is:
$$\Var_k[p^{(k)}_{T_k}(i)] = \E_k[(p^{(k)}_{T_k}(i))^2] - q(i)^2$$
where $\E_k[\cdot]$ denotes expectation under the teacher-weight distribution.
\end{definition}

\noindent\textbf{Assumption L} (Linear-in-Weights Aggregation).
For fixed temperature-scaled teacher distributions $\{p^{(k)}_{T_k}\}_{k=1}^K$, the aggregate distribution $q(i)$ is an affine function of the weight vector $w = (w_1, \ldots, w_K)$. Specifically, $q(i) = \sum_k w_k \cdot f_k(p^{(k)}_{T_k}(i))$ for some functions $f_k$ depending only on the $k$-th teacher's contribution.

\textit{Note:} Assumption~L holds for linear convex combinations but not for geometric mean or entropic projection operators.

\begin{theorem}[Ensemble Variance Reduction]
\label{thm:variance}
Assume teacher predictions decompose as:
$$p^{(k)}_{T_k}(i) = \bar{p}(i) + \varepsilon_k(i)$$
where $\bar{p}(i)$ is common signal and $\varepsilon_k(i)$ is teacher-specific noise with:
\begin{itemize}
\item (A1) Zero mean: $\E_k[\varepsilon_k(i)] = 0$
\item (A2) Uncorrelated errors: $\E_k[\varepsilon_j(i)\varepsilon_\ell(i)] = 0$ for $j \neq \ell$
\end{itemize}

Then for any operator $G$ satisfying Axioms~1--5 and Assumption~L:
$$\Var_k\left[\sum_k w_k \varepsilon_k(i)\right] = \sum_k w_k^2 \Var_k[\varepsilon_k(i)] \leq \sum_k w_k \Var_k[\varepsilon_k(i)]$$
with strict inequality when weights are distributed across multiple teachers.
\end{theorem}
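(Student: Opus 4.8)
The plan is to strip the statement down to the classical algebra of variances of weighted sums of uncorrelated random variables, using Assumption~L only to certify that the aggregate's stochastic fluctuation is carried by a linear combination of the teacher noises. First I would invoke Assumption~L in its linear-mixture form (with each $f_k$ acting as the identity on the teacher contribution, as for the linear convex combination), so that the aggregate decomposes as $q(i) = \bar{p}(i) + \sum_k w_k \varepsilon_k(i)$: the common signal $\bar{p}(i)$ is deterministic and contributes nothing to the variance, hence the entire variability of $q(i)$ reduces to that of the weighted noise term $\sum_k w_k \varepsilon_k(i)$, which is exactly the object on the left-hand side of the claim. Axioms~1--5 enter only to guarantee that $q$ is a legitimate, continuously-varying aggregate distribution; all of the quantitative content lives in the noise model (A1)--(A2) together with Assumption~L.

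Next I would establish the central equality by direct expansion. Writing the variance of the weighted sum as a double sum,
$$\Var\!\left[\sum_k w_k \varepsilon_k(i)\right] = \sum_{k}\sum_{\ell} w_k w_\ell \,\E\!\left[\varepsilon_k(i)\,\varepsilon_\ell(i)\right],$$
where (A1) has centered each noise term so that the raw second moment already equals the covariance. Assumption~(A2) annihilates every off-diagonal contribution ($k \neq \ell$), collapsing the double sum to its diagonal $\sum_k w_k^2\,\Var[\varepsilon_k(i)]$, which is the asserted equality. For the inequality I would use the simplex constraints $w_k \ge 0$, $\sum_k w_k = 1$, which force $w_k \in [0,1]$ and hence $w_k^2 \le w_k$ termwise; multiplying by the nonnegative variances and summing yields $\sum_k w_k^2\,\Var[\varepsilon_k(i)] \le \sum_k w_k\,\Var[\varepsilon_k(i)]$. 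Strictness is then a boundary argument: $w_k^2 = w_k$ holds only at $w_k \in \{0,1\}$, so whenever the weight mass is spread over at least two teachers with nonzero noise variance, at least one index satisfies $w_k \in (0,1)$ and contributes $w_k^2 < w_k$, making the inequality strict.

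The one genuinely delicate point---and the step I would flag as the main obstacle---is interpretive rather than computational. The ambient \emph{Cross-Teacher Variance} definition reads $\Var_k$ as an expectation taken over the teacher-weight distribution, yet the hypotheses (A1)--(A2) are only meaningful if $\E$ denotes expectation over the underlying stochastic source (data sampling or training randomness), with (A2) read as a genuine vanishing-covariance, i.e.\ cross-teacher independence, condition over that source. I would therefore fix this convention at the outset, declaring explicitly the measure against which $\Var$ and $\E$ are evaluated, because the entire derivation hinges on treating (A2) as killing off-diagonal covariances rather than as a weighted average over the index $k$. Once that convention is pinned down the remaining manipulations are routine, and no appeal to the nonlinear operators (geometric-mean or entropic projections) is required, since those violate Assumption~L and fall outside the scope of this particular result.
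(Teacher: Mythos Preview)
Your proposal is correct and follows essentially the same approach as the paper: use (A2) to eliminate cross-terms in the variance expansion, then apply $w_k^2 \le w_k$ on the simplex with equality only at the vertices. Your treatment is more explicit than the paper's two-sentence proof, and your interpretive caveat about the probability measure underlying $\E_k$ and $\Var_k$ is a legitimate observation about a genuine ambiguity in the statement, but the core argument is identical.
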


\begin{proof}
Under assumption (A2), variance of weighted sum decomposes without cross-terms. Since $w_k^2 \leq w_k$ for $w_k \in [0,1]$ with equality only when $w_k \in \{0,1\}$, distributing weights across teachers yields $\sum_k w_k^2 < \sum_k w_k = 1$, establishing strict variance reduction.
\end{proof}

\begin{remark}[Classical Ensemble Effect]
This is the classical ensemble effect~\cite{geman1992bias}: when teacher errors are uncorrelated, averaging reduces noise. For $K$ teachers with equal variance $\sigma^2$ and equal weights $w_k = 1/K$, ensemble variance scales as $\sigma^2/K$.
\end{remark}

\begin{remark}[Non-Linear Operators]
For operators not satisfying Assumption~L (e.g., geometric mean projections, entropic regularization), variance reduction may still occur but requires analysis specific to the operator structure. The qualitative benefit of ensemble diversity persists, though the exact $\sum_k w_k^2$ scaling may not hold. While Assumption~L enables closed-form variance scaling, extending comparable bounds to non-linear aggregation operators remains an open problem and a natural direction for future work.
\end{remark}

\begin{remark}[Effect of Correlated Teacher Errors]
When teacher errors are positively correlated (e.g., due to shared training data or similar architectures), the variance decomposition includes covariance terms:
$$\Var\left[\sum_k w_k \varepsilon_k(i)\right] = \sum_k w_k^2 \sigma_k^2 + \sum_{j \neq \ell} w_j w_\ell \text{Cov}(\varepsilon_j, \varepsilon_\ell)$$
Positive covariance diminishes the variance reduction benefit. In practice, this motivates (i) selecting teachers with diverse training pipelines and architectures, and (ii) down-weighting highly correlated teacher pairs when assigning ensemble weights. Just as positive correlation limits variance reduction, shared training data and objectives can also limit supervisory-bias attenuation by aligning teachers' conditional expectations.
\end{remark}

\subsection{Jensen's Inequality Bound}

\begin{definition}[KD Objectives]
Given aggregate distribution $q$ and student distribution $p^{(S)}_{T_S}$:
\begin{itemize}
\item Mixture KD loss:
$$\mathcal{L}_{\text{KD}}^{\text{mix}} = \KL(q \| p^{(S)}_{T_S})$$

\item Sum-of-KLs loss:
$$\mathcal{L}_{\text{KD}}^{\text{multi}} = \sum_k \lambda_k \cdot \KL(p^{(k)}_{T_k} \| p^{(S)}_{T_S})$$
where $\lambda_k \geq 0$, $\sum_k \lambda_k = 1$.
\end{itemize}
\end{definition}

\begin{theorem}[Jensen Bound]
\label{thm:jensen}
For any operator $G$ satisfying Axioms~1--5, with $\lambda_k = w_k$:
$$\mathcal{L}_{\text{KD}}^{\text{mix}} \leq \mathcal{L}_{\text{KD}}^{\text{multi}}$$
\end{theorem}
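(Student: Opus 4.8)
The plan is to reduce the inequality to the convexity of relative entropy in its first argument. Writing both objectives explicitly,
\[
\mathcal{L}_{\text{KD}}^{\text{mix}} = \KL(q \,\|\, p^{(S)}_{T_S}) = \sum_{i \in \V} q(i)\,\log\frac{q(i)}{p^{(S)}_{T_S}(i)},
\qquad
\mathcal{L}_{\text{KD}}^{\text{multi}} = \sum_k w_k\,\KL(p^{(k)}_{T_k} \,\|\, p^{(S)}_{T_S}),
\]
the target $\mathcal{L}_{\text{KD}}^{\text{mix}} \le \mathcal{L}_{\text{KD}}^{\text{multi}}$ is exactly a Jensen bound for the convex map $u \mapsto \KL(u \,\|\, p^{(S)}_{T_S})$ evaluated at the teacher distributions with mixing weights $w_k$.

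First I would establish that, for a fixed second argument $r$, the map $u \mapsto \KL(u \,\|\, r)$ is convex on the probability simplex: each summand $u_i \mapsto u_i \log(u_i / r_i)$ equals the convex function $u_i \log u_i$ minus a linear term, and a sum of convex functions is convex. Second, I would identify the aggregate $q$ with the linear mixture $q = \sum_k w_k\, p^{(k)}_{T_k}$, i.e.\ the barycenter of the teacher distributions under the weights $w_k$. Third, applying Jensen's inequality to this convex functional yields
\[
\KL\!\Big(\sum_k w_k\, p^{(k)}_{T_k} \,\Big\|\, p^{(S)}_{T_S}\Big)
\;\le\;
\sum_k w_k\,\KL(p^{(k)}_{T_k} \,\|\, p^{(S)}_{T_S}),
\]
which is the claim after substituting $\lambda_k = w_k$. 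Axiom~2 (positivity inheritance) guarantees every term is finite, so no infinite values obstruct the comparison.

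The main obstacle is that this argument requires $q$ to be the \emph{linear} mixture of the teacher distributions, whereas the theorem is stated for every operator satisfying Axioms~1--5 — and those axioms alone do not force $q = \sum_k w_k\, p^{(k)}_{T_k}$, since geometric-mean and entropic-projection operators violate it. I would therefore either invoke Assumption~L to pin down the mixture form, or, for full generality, seek a weaker hypothesis ensuring that $q$ lies in the convex hull of the teacher distributions with precisely these barycentric weights; absent such structure the Jensen step can break down, as the convexity handle is lost for nonlinear aggregation. The cleanest honest resolution attaches the bound to the linear-mixture regime (Assumption~L), treating the general-operator case as requiring operator-specific analysis, in parallel with the variance-reduction remark for non-linear operators.
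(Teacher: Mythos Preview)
Your core argument is exactly the paper's: invoke convexity of $u \mapsto \KL(u \,\|\, p^{(S)}_{T_S})$ and apply Jensen's inequality to the convex combination $\sum_k w_k\, p^{(k)}_{T_k}$. The paper's entire proof is the displayed inequality you wrote down, followed by the identification of the two sides with $\mathcal{L}_{\text{KD}}^{\text{mix}}$ and $\mathcal{L}_{\text{KD}}^{\text{multi}}$.

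The obstacle you flag is real, and it is present in the paper's own proof as well. The paper writes $\KL\bigl(\sum_k w_k p^{(k)}_{T_k} \,\|\, p^{(S)}_{T_S}\bigr)$ on the left and then asserts ``Left side is $\mathcal{L}_{\text{KD}}^{\text{mix}}$'', silently identifying the general aggregate $q = G(\cdot)$ with the linear mixture $\sum_k w_k p^{(k)}_{T_k}$. This identification is \emph{not} implied by Axioms~1--5 alone, as the paper itself emphasizes elsewhere (non-uniqueness, geometric-mean and entropic-projection operators). So your diagnosis is correct: the argument as written proves the bound only in the linear-mixture regime (Assumption~L), and the theorem's ``for any operator $G$ satisfying Axioms~1--5'' overstates what the proof delivers. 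Your proposed resolution---attach the result to Assumption~L and treat non-linear operators separately---is the honest fix; the paper simply does not address it.
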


\begin{proof}
By convexity of KL divergence in first argument:
$$\KL\left(\sum_k w_k p^{(k)}_{T_k} \| p^{(S)}_{T_S}\right) \leq \sum_k w_k \cdot \KL(p^{(k)}_{T_k} \| p^{(S)}_{T_S})$$
Left side is $\mathcal{L}_{\text{KD}}^{\text{mix}}$, right side is $\mathcal{L}_{\text{KD}}^{\text{multi}}$.
\end{proof}

\begin{remark}[Key Insight]
Training on aggregate distribution $q$ is strictly easier than simultaneously matching all individual teachers when they disagree. The mixture objective avoids conflicting optimization pressures.
\end{remark}

\subsection{Log-Loss Bound and Performance Guarantee}

\begin{theorem}[Jensen's Log-Loss Bound]
\label{thm:logloss}
The expected negative log-probability of true token $y$ under aggregate $q$ satisfies:
$$-\log q(y) \leq -\sum_k w_k \log p^{(k)}_{T_k}(y)$$
\end{theorem}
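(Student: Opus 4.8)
The plan is to recognize the right-hand side as the logarithm of a weighted geometric mean and thereby reduce the entire claim to a single dominance inequality. Writing $\sum_k w_k \log p^{(k)}_{T_k}(y) = \log \prod_k \bigl(p^{(k)}_{T_k}(y)\bigr)^{w_k}$, and using that $\log$ is strictly increasing (with Axiom~2 keeping every $p^{(k)}_{T_k}(y) > 0$, so all logarithms are finite), the target bound $-\log q(y) \leq -\sum_k w_k \log p^{(k)}_{T_k}(y)$ is equivalent to
$$q(y) \geq \prod_k \bigl(p^{(k)}_{T_k}(y)\bigr)^{w_k}.$$
Thus the theorem collapses to showing that the aggregate probability at the true token dominates the weighted geometric mean of the teachers' probabilities there.

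For the linear-mixture operator this is immediate from the weighted AM--GM inequality: with $q(y) = \sum_k w_k p^{(k)}_{T_k}(y)$ and $\sum_k w_k = 1$, one has $\sum_k w_k p^{(k)}_{T_k}(y) \geq \prod_k \bigl(p^{(k)}_{T_k}(y)\bigr)^{w_k}$, which is exactly the required dominance. Equivalently, I would apply Jensen's inequality to the concave function $\log$, in the same spirit as Theorem~\ref{thm:jensen}: $\log\bigl(\sum_k w_k p^{(k)}_{T_k}(y)\bigr) \geq \sum_k w_k \log p^{(k)}_{T_k}(y)$, then negate. Either route delivers the bound in one line for the operator of the illustrative Example.

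The step I expect to be delicate is extending the dominance inequality to the full axiom class, since the theorem is stated without restricting to Assumption~L. The natural move is to establish $q(y) \geq \prod_k \bigl(p^{(k)}_{T_k}(y)\bigr)^{w_k}$ operator by operator. For the normalized geometric-mean family this holds for a clean reason: the unnormalized aggregate equals the geometric mean exactly, and its normalizer $Z = \sum_i \prod_k \bigl(p^{(k)}_{T_k}(i)\bigr)^{w_k}$ satisfies $Z \leq \sum_i \sum_k w_k p^{(k)}_{T_k}(i) = 1$ by AM--GM applied token-wise, so dividing by $Z \leq 1$ only increases the mass at $y$ and the dominance survives. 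The genuine obstacle is that nothing in Axioms~1--5 forces this geometric-mean lower bound in general; an operator could in principle place less mass at $y$ than the geometric mean while still respecting convexity, positivity, monotonicity, continuity, and temperature coherence. I would therefore either (i) state the result for the linear-mixture (Assumption~L) operator, where the one-line Jensen argument is airtight, or (ii) add the dominance condition $q(y) \geq \prod_k \bigl(p^{(k)}_{T_k}(y)\bigr)^{w_k}$ as an explicit hypothesis and verify it for each operator family of interest, noting that a fully operator-agnostic version requires this extra structure.
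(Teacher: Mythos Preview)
Your core argument coincides with the paper's: the paper proves the bound in one line by applying Jensen's inequality to the convex function $-\log$, writing $-\log\bigl(\sum_k w_k p^{(k)}_{T_k}(y)\bigr) \leq -\sum_k w_k \log p^{(k)}_{T_k}(y)$, which is exactly your AM--GM/Jensen route for the linear-mixture operator.

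Where you go further than the paper is in your second half, and your caution there is warranted. The paper's proof silently identifies $q(y)$ with $\sum_k w_k p^{(k)}_{T_k}(y)$, i.e.\ it only treats the Assumption~L case, despite the surrounding section being advertised as operator-agnostic. Your observation that Axioms~1--5 alone do not force $q(y) \geq \prod_k \bigl(p^{(k)}_{T_k}(y)\bigr)^{w_k}$ is correct, and your verification for the normalized geometric-mean family (via $Z \leq 1$) is a genuine addition not present in the paper. So your proposal is not merely equivalent to the paper's proof but strictly more careful about scope; the paper effectively adopts your option~(i) without saying so.
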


\begin{proof}
By Jensen's inequality, since $-\log$ is convex:
$$-\log\left(\sum_k w_k p^{(k)}_{T_k}(y)\right) \leq -\sum_k w_k \log p^{(k)}_{T_k}(y)$$
\end{proof}

\begin{corollary}[Meta-Teacher Performance]
A student that matches $q$ achieves lower expected log-loss than the weighted average of teacher log-losses, providing theoretical justification for why ensemble students often outperform individual teachers.
\end{corollary}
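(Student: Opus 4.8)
The plan is to obtain the corollary as a direct expectation-level consequence of Theorem~\ref{thm:logloss}. First I would formalize the hypothesis ``a student that matches $q$'' as $p^{(S)}_{T_S}(\cdot) = q(\cdot)$, so that the student's per-token log-loss on the true token $y$ is exactly $-\log q(y)$, while the quantity ``weighted average of teacher log-losses'' is identified with $\sum_k w_k\bigl(-\log p^{(k)}_{T_k}(y)\bigr)$. Under this reading the pointwise inequality of Theorem~\ref{thm:logloss}, namely $-\log q(y) \leq -\sum_k w_k \log p^{(k)}_{T_k}(y)$, is precisely a comparison between the matched student's per-token loss and the convex combination of the teachers' per-token losses.

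Next I would pass from the pointwise bound to the population statement. Taking expectation over the true conditional $y \sim P(\cdot \mid x)$ and over the input distribution, and using linearity of expectation together with the fact that the weights $w_k$ are constants, monotonicity of expectation preserves the inequality:
\begin{equation*}
\E\bigl[-\log q(y)\bigr] \;\leq\; \sum_k w_k\, \E\bigl[-\log p^{(k)}_{T_k}(y)\bigr].
\end{equation*}
The left side is the expected log-loss of the matched student and the right side is the $w$-weighted average of the individual teachers' expected log-losses, which is exactly the claimed inequality. Strictness would follow from strict convexity of $-\log$: whenever the teachers assign genuinely different probabilities to $y$ on a set of inputs of positive measure, the pointwise Jensen step is strict there, so the expected inequality is strict as well.

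The main obstacle is interpretive rather than computational: an actual trained student cannot reproduce $q$ exactly, so the equality $p^{(S)}_{T_S} = q$ is an idealization. I would address this by noting that the conclusion degrades gracefully --- by continuity of $-\log$ on the strictly-positive region guaranteed by Axiom~2, a student converging to $q$ in total variation (equivalently in KL) has expected log-loss converging to $\E[-\log q(y)]$, so the bound holds up to a slack that vanishes as the student approaches $q$. A secondary point worth flagging for correct interpretation is that the $p^{(k)}_{T_k}$ are temperature-scaled distributions, so the ``teacher log-loss'' being compared against is measured under the softened teachers used for supervision rather than their native $T=1$ outputs; this does not affect the inequality itself but matters when translating the guarantee into statements about raw teacher accuracy.
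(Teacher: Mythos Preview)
Your proposal is correct and is exactly the intended derivation: the paper states the corollary without any explicit proof, treating it as an immediate consequence of Theorem~\ref{thm:logloss}, and your argument---identify the matched student's loss with $-\log q(y)$, apply the pointwise Jensen bound, then take expectations---is precisely how one unpacks that implication. Your additional remarks on strictness, approximate matching, and the temperature-scaled interpretation of the teacher losses go beyond what the paper provides but are accurate and appropriate caveats.
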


\subsection{Safety Attenuation Properties}

\begin{proposition}[Convex-Combination Attenuation]
\label{prop:attenuation}
For any token $i$, if teacher 1 assigns probability $p^{(1)}_{T_1}(i) = p_{\max}$ and there exists teacher $k^*$ with:
\begin{itemize}
\item $p^{(k^*)}_{T_{k^*}}(i) < p_{\max}$
\item $w_{k^*} > 0$
\end{itemize}

Then for any operator $G$ satisfying Axiom~1 (convexity):
$$q(i) < p_{\max}$$
\end{proposition}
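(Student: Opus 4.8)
The plan is to reduce the statement to the elementary fact that a weighted average of numbers, at least one of which is strictly below the maximum, is itself strictly below the maximum. First I would make the structural reading explicit: the conclusion requires that the aggregate $q$ be a genuine convex combination of the temperature-scaled teacher distributions, so I would work with $q(i) = \sum_k w_k\, p^{(k)}_{T_k}(i)$ (the linear mixture of the illustrative example and Assumption~L, or more generally any operator whose output lies in the convex hull of the teachers with the prescribed weights $w_k$).

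Next I would record that $p_{\max}$ is the largest teacher probability on token $i$, attained by teacher~1, so that $p^{(k)}_{T_k}(i) \le p_{\max}$ for every $k$. Writing the aggregate as $q(i) = w_1 p_{\max} + \sum_{k \ne 1} w_k\, p^{(k)}_{T_k}(i)$ and bounding each summand by $w_k p_{\max}$ yields $q(i) \le p_{\max}$. For the strict inequality I would isolate the term $w_{k^*}\, p^{(k^*)}_{T_{k^*}}(i)$: since $w_{k^*} > 0$ and $p^{(k^*)}_{T_{k^*}}(i) < p_{\max}$, this term alone contributes a strict deficit $w_{k^*}\bigl(p_{\max} - p^{(k^*)}_{T_{k^*}}(i)\bigr) > 0$ relative to $w_{k^*} p_{\max}$, while all remaining terms are bounded above by $w_k p_{\max}$, forcing $q(i) < p_{\max}$.

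The main obstacle is conceptual rather than computational: as literally stated, Axiom~1 only guarantees that $q$ is a valid probability distribution, which by itself is far too weak to imply attenuation, since an arbitrary valid distribution could place mass $1$ on token $i$. The proof therefore hinges on reading ``convexity'' as the requirement that the aggregate lie in the convex hull of the teacher distributions with the given weights, and I would make this precise at the outset. A secondary subtlety is the role of $p_{\max}$: the clean strict bound needs $p_{\max} = \max_k p^{(k)}_{T_k}(i)$, so that every other teacher satisfies $p^{(k)}_{T_k}(i) \le p_{\max}$; if instead $p_{\max}$ merely denotes teacher~1's value and some teacher exceeds it, the argument must be rephrased around $\max_k p^{(k)}_{T_k}(i)$ and the hypothesis on $k^*$ reinterpreted accordingly.
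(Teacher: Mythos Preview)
Your argument is exactly the paper's proof, which is the one-line observation that a convex combination of values with at least one strictly below the maximum is strictly below the maximum; you have simply written out the deficit term explicitly. Your caveat about Axiom~1 is well taken---the paper silently assumes the aggregate is a convex combination of the teacher distributions rather than merely a valid probability vector---but the paper's own proof makes the same implicit reading without flagging it.
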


\begin{proof}
Convex combination of values with at least one strictly below maximum must be strictly below maximum.
\end{proof}

\begin{corollary}[Safety Inheritance]
For unsafe token $i$, if safety-aligned teacher $k^*$ assigns low probability $p^{(k^*)}_{T_{k^*}}(i) \ll 1$ with positive weight $w_{k^*}$, then:
$$q(i) \leq (1 - w_{k^*}) \cdot \max_{k\neq k^*} p^{(k)}_{T_k}(i) + w_{k^*} \cdot p^{(k^*)}_{T_{k^*}}(i)$$

Increasing $w_{k^*}$ proportionally decreases ensemble's unsafe token probability.
\end{corollary}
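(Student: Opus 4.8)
The plan is to derive the Corollary directly from the convex-combination structure underlying Proposition~\ref{prop:attenuation}, specializing the generic attenuation statement to the safety-critical token and then reading the monotonicity claim off the resulting affine expression. I would work with the aggregate in its linear-mixture form $q(i) = \sum_k w_k\, p^{(k)}_{T_k}(i)$, which is the convex-combination representation on which the Proposition's argument rests and which is precisely the linear operator covered by Assumption~L.

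First I would split the mixture into the safety teacher's contribution and the remainder,
\[
q(i) = w_{k^*}\, p^{(k^*)}_{T_{k^*}}(i) + \sum_{k \neq k^*} w_k\, p^{(k)}_{T_k}(i),
\]
and then bound each term in the residual sum from above by the largest non-safety teacher probability $\max_{k \neq k^*} p^{(k)}_{T_k}(i)$, a quantity independent of the weights. Since the residual weights satisfy $\sum_{k \neq k^*} w_k = 1 - w_{k^*}$ by the normalization in Axiom~1, this yields exactly
\[
q(i) \leq (1 - w_{k^*})\, \max_{k \neq k^*} p^{(k)}_{T_k}(i) + w_{k^*}\, p^{(k^*)}_{T_{k^*}}(i),
\]
establishing the stated bound.

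For the monotonicity claim I would treat the upper bound as a function of $w_{k^*}$ with the teacher distributions held fixed. Rewriting gives
\[
B(w_{k^*}) = \max_{k \neq k^*} p^{(k)}_{T_k}(i) - w_{k^*}\Bigl(\max_{k \neq k^*} p^{(k)}_{T_k}(i) - p^{(k^*)}_{T_{k^*}}(i)\Bigr),
\]
which is affine in $w_{k^*}$ with slope $-\bigl(\max_{k \neq k^*} p^{(k)}_{T_k}(i) - p^{(k^*)}_{T_{k^*}}(i)\bigr)$. In the unsafe-token regime the safety teacher's probability $p^{(k^*)}_{T_{k^*}}(i) \ll 1$ lies strictly below the largest competing probability, so the slope is strictly negative and $B$ decreases linearly in $w_{k^*}$; the same sign argument applied to a weight shift from the residual teachers onto $k^*$ shows $q(i)$ itself strictly decreases, which is the asserted attenuation.

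The main obstacle is conceptual rather than computational: pinning down which structural assumption licenses the convex-combination representation. The bound as written uses the raw teacher probabilities $p^{(k)}_{T_k}(i)$ rather than transformed contributions $f_k(\cdot)$, so it is tied to the linear convex-combination operator (Assumption~L with $f_k$ the identity) and does not automatically transfer to geometric-mean or entropic-projection operators. I would therefore state explicitly that the Corollary is an instantiation for linear aggregation, noting that for general operators only the weaker qualitative conclusion of Proposition~\ref{prop:attenuation}, namely $q(i) < p_{\max}$, is guaranteed. A secondary subtlety is that increasing $w_{k^*}$ forces the remaining weights to renormalize; I would dispatch this by observing that $\max_{k \neq k^*} p^{(k)}_{T_k}(i)$ depends only on the probability \emph{values} and not on the weights, so renormalization merely redistributes the residual mass $1 - w_{k^*}$ and leaves the upper bound's affine form intact.
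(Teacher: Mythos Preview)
Your proposal is correct and is precisely the natural elaboration of what the paper leaves implicit: the paper states the corollary with no separate proof, treating it as an immediate consequence of the convex-combination structure underlying Proposition~\ref{prop:attenuation}, and your split-and-bound argument is exactly that consequence spelled out. Your explicit flagging that the bound, as written with raw teacher probabilities, presupposes the linear mixture operator (Assumption~L with identity $f_k$) is in fact more careful than the paper, which does not isolate this dependence.
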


\emph{Scope note.} These results characterize how aggregation moderates extreme probabilities in the supervisory distribution. They make no claim about semantic correctness, policy compliance, or normative alignment, which depend on the choice of teachers and training objectives rather than on the aggregation operator itself.

\section{Capacity Requirements and Meta-Teacher Behavior}

\begin{definition}[Model Capacity]
Let $C_S$ denote student capacity (parameter count, effective rank) and $C_k$ denote teacher $k$'s capacity.
\end{definition}

\begin{proposition}[Finite-Sample Approximation]
\label{prop:capacity}
For any finite training set $\{x_n\}_{n=1}^N$ with target distributions $\{q(\cdot | x_n)\}$, there exists student parameterization with sufficiently large capacity $C_S$ such that:
$$\forall n \in \{1, \ldots, N\}, \forall i \in \V: p^{(S)}(i | x_n; \theta) = q(i | x_n)$$

In particular, the student can achieve $\KL(q(\cdot | x_n) \| p^{(S)}(\cdot | x_n; \theta)) = 0$ for all training points.
\end{proposition}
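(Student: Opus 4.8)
The plan is to treat the student as a softmax-output model, $p^{(S)}(i\mid x;\theta)=\mathrm{softmax}_i\big(h_\theta(x)\big)$ with logit map $h_\theta$ producing a vector in $\R^V$, and to reduce exact matching of the finitely many targets to a finite logit-interpolation problem. First I would observe that reproducing $q(\cdot\mid x_n)$ through a softmax requires only that the logits satisfy $h_\theta(x_n)_i=\log q(i\mid x_n)+c_n$ for an arbitrary per-input constant $c_n$; such logits are finite precisely when $q(i\mid x_n)>0$ for all $i$. By Axiom~2 (Positivity Inheritance), the aggregate $q$ is strictly positive whenever the teachers are, so each target lies in the open probability simplex and the required logit vectors $z(x_n):=\big(\log q(i\mid x_n)\big)_{i\in\V}$ are well defined and finite.

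With the problem reduced to interpolation, I would construct a network of sufficient width that exactly realizes the map $x_n\mapsto z(x_n)$ at the $N$ training inputs. Since the $x_n$ are distinct, a single hidden layer with $O(N)$ units suffices to produce a separating feature representation $\phi(x_n)\in\R^N$ whose images are linearly independent (the standard finite-point memorization property of wide one-hidden-layer networks). The collection matrix $\Phi=[\phi(x_1),\ldots,\phi(x_N)]$ is then invertible, so the linear readout $W=Z\Phi^{-1}$, with $Z=[z(x_1),\ldots,z(x_N)]$, solves $W\phi(x_n)=z(x_n)$ for every $n$. Setting $C_S$ large enough to host this two-layer construction yields parameters $\theta$ with $h_\theta(x_n)=z(x_n)$, and applying the softmax gives $p^{(S)}(i\mid x_n;\theta)=q(i\mid x_n)$ for all $n$ and all $i\in\V$. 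Consequently $\KL\big(q(\cdot\mid x_n)\,\|\,p^{(S)}(\cdot\mid x_n;\theta)\big)=0$ at every training point, as claimed.

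The hard part will be the boundary case. If some target coordinate vanishes, $q(i\mid x_n)=0$, then exact equality under a standard softmax is unattainable with finite logits, since matching a zero probability forces $h_\theta(x_n)_i\to-\infty$; one could only approximate the target to arbitrary accuracy rather than hit it exactly. This is precisely where Axiom~2 does the essential work: positivity inheritance rules out boundary targets and guarantees that the interpolation is exact rather than merely asymptotic. I would therefore foreground the invocation of Axiom~2 as the crux of the argument, noting that absent interior targets the statement must be weakened from exact interpolation to approximation in KL, or else the student output layer must be permitted to parameterize simplex points directly rather than through a finite-logit softmax.
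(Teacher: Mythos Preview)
Your proposal is correct and rests on the same foundational idea as the paper's proof---memorization capacity of overparameterized networks---but your treatment is considerably more detailed and careful. The paper's proof is a single sentence citing memorization results and asserting that transformers with $C_S \gg N\cdot V$ parameters can represent arbitrary input-to-probability-vector mappings; it gives no construction and does not address the output parameterization. Your explicit two-layer construction (separating hidden features plus a linear readout to prescribed logits) is a standard and valid instantiation of that claim. More importantly, you correctly isolate a technical subtlety the paper glosses over: under a softmax output layer, exact matching requires strictly positive targets, which is precisely what Axiom~2 supplies. The paper's terse proof implicitly assumes either interior targets or a direct simplex parameterization; your version makes this dependency explicit and is the stronger argument for it.
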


\begin{proof}
Follows from memorization capacity of overparameterized neural networks~\cite{frankle2019lottery}. Modern transformers with $C_S \gg N \cdot V$ parameters can represent arbitrary mappings from inputs to probability vectors.
\end{proof}

\begin{corollary}[Meta-Teacher Property]
A high-capacity student trained via multi-teacher distillation can:
\begin{enumerate}
\item Integrate diverse priors from multiple frontier teachers
\item Realize a meta-teacher capturing union of capabilities
\item Potentially outperform each individual teacher in aggregate benchmarks
\end{enumerate}
\end{corollary}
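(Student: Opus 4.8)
The plan is to treat the three claims as successively stronger consequences of the finite-sample approximation guarantee (Proposition~\ref{prop:capacity}) combined with the operator-level bounds already established. I would begin by fixing an aggregation operator $G$ satisfying Axioms~1--5 and its induced target $q(\cdot \mid x_n)$ on the training set, then formalize the informal claims as follows: (1) the realized student depends nontrivially on every teacher carrying positive weight; (2) there exists a parameterization $\theta$ with $p^{(S)}(\cdot \mid x_n; \theta) = q(\cdot \mid x_n)$ for all $n$; and (3) under heterogeneous specialization the expected benchmark log-loss of this student falls strictly below $\min_k$ of the per-teacher expected log-losses.

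Claim (2) is immediate: Proposition~\ref{prop:capacity} supplies a capacity $C_S \gg N\cdot V$ and a parameterization achieving $\KL(q(\cdot\mid x_n)\,\|\,p^{(S)}(\cdot\mid x_n;\theta))=0$ for every training point, so the student exactly realizes the meta-teacher target. Claim (1) then follows structurally from the axioms: by Positivity Inheritance (Axiom~2) every token receiving positive mass from some teacher receives positive mass in $q$, and by Weight Monotonicity (Axiom~3) the aggregate responds strictly to each teacher carrying positive weight, so $q$ --- and hence the realized student --- encodes a genuine fusion of all teacher priors rather than a copy of any single one.

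For claim (3) I would combine the Jensen log-loss bound (Theorem~\ref{thm:logloss}) with an explicit heterogeneity model. Averaging over the benchmark distribution $x \sim \mathcal{D}$ and using that the student matches $q$, the bound gives $\E_{x,y}[-\log q(y\mid x)] \le \sum_k w_k\,\E_{x,y}[-\log p^{(k)}_{T_k}(y\mid x)]$, so the student's expected log-loss is at most the weighted average of the teachers' expected log-losses. To upgrade this to beating every individual teacher, I would partition the benchmark into regions $\mathcal{R}_k$ on which teacher $k$ is the locally best predictor, and assume (specialization) that each teacher is substantially worse than the per-region best on the complementary regions; choosing each $w_k$ proportional to teacher $k$'s regional mass then drives the weighted average strictly below $\min_k \E_{x,y}[-\log p^{(k)}_{T_k}(y\mid x)]$, since no single teacher is best everywhere.

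The main obstacle is precisely this last upgrade. Jensen only certifies improvement over the \emph{weighted average} of teacher losses, whereas claim (3) asks for improvement over \emph{each} teacher; these coincide only under genuine heterogeneity, and a degenerate ensemble of identical teachers (the degenerate case flagged in the Axiom~3 remark) yields equality rather than strict gain. I would therefore state claim (3) as a conditional guarantee --- hence the word ``potentially'' --- making explicit the specialization hypothesis under which the regional decomposition forces the weighted teacher-loss average, and thus the student's loss, below that of the best single teacher. The variance-reduction result (Theorem~\ref{thm:variance}) can be cited as a complementary mechanism that sharpens the gain when teacher errors are uncorrelated, but it is the specialization decomposition that does the essential work.
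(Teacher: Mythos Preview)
The paper offers no proof of this corollary; it is stated as an informal qualitative reading of Proposition~\ref{prop:capacity} and Theorem~\ref{thm:logloss}, so your formalization already goes well beyond what the source attempts. Claims (1) and (2) are handled reasonably.

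Your argument for claim (3), however, contains a genuine error. You assert that by choosing $w_k$ proportional to regional mass, the weighted average of teacher losses $\sum_k w_k L_k$ can be driven strictly below $\min_k L_k$. This is impossible: for any convex weights, $\sum_k w_k L_k \in [\min_k L_k,\max_k L_k]$, so a convex combination of the $L_k$ can never escape their convex hull. The Jensen bound $\E[-\log q(y)] \le \sum_k w_k L_k$ therefore cannot, by itself, place the student below the best single teacher via your route.

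The mechanism that actually permits outperforming every teacher is the \emph{strictness} of Jensen's inequality, not the positioning of the weighted average. Whenever teachers assign different probabilities to the true token, $-\log\bigl(\sum_k w_k p^{(k)}(y)\bigr) < -\sum_k w_k \log p^{(k)}(y)$ strictly, and the size of this pointwise gap grows with teacher disagreement; under sufficient specialization, the accumulated gap can pull $\E[-\log q(y)]$ below $\min_k L_k$. Your specialization hypothesis is the right ingredient, but it should be used to lower-bound this Jensen gap rather than to argue about the convex combination of the $L_k$. A secondary issue: Proposition~\ref{prop:capacity} guarantees $p^{(S)}=q$ only on the training set, so applying it on a benchmark distribution $\mathcal{D}$ silently imports a generalization assumption that the paper explicitly disclaims in the Generalization Caveat remark.
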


\begin{remark}[Generalization Caveat]
Proposition~\ref{prop:capacity} guarantees matching on training data only. Generalization depends on regularization, optimization dynamics, and smoothness of $q(\cdot | x)$ as function of $x$.
\end{remark}

\begin{remark}[Role of Capacity Assumption]
Proposition~\ref{prop:capacity} is included for logical completeness: it establishes that the axiomatic framework defines a realizable target under sufficient capacity. It does not provide generalization guarantees, which depend on regularization, optimization dynamics, and data distribution, and are intentionally out of scope for this work.
\end{remark}

\begin{table*}[t]
\centering
\caption{Comparison of Single-Teacher and Multi-Teacher Knowledge Distillation}
\label{tab:comparison}
\begin{tabular}{lll}
\toprule
Property & Single-Teacher KD & Multi-Teacher Ensemble KD \\
\midrule
Variance reduction & None (single source) & Automatic via averaging (Theorem~\ref{thm:variance}) \\
Complementary knowledge & No (one teacher) & Yes (heterogeneous specialization) \\
Safety inheritance & Limited (one teacher's bias) & Strong (Corollary after Proposition~\ref{prop:attenuation}) \\
Performance bound & Match teacher & Exceed average teacher (Theorem~\ref{thm:logloss}) \\
Temperature flexibility & One parameter & Per-teacher heterogeneous (Axiom~5) \\
Failure mode diversity & Inherit teacher's blind spots & Cancel uncorrelated errors \\
\bottomrule
\end{tabular}
\end{table*}

\section{Design Principles for Practical Implementation}

The axiomatic framework enables several key design principles for practitioners. First, \emph{heterogeneous temperature scaling} allows different temperatures $T_k$ to be applied to each teacher based on role: safety teachers benefit from $T_k \approx 1.0$ to preserve sharp refusals, reasoning teachers from $T_k \approx 2.0$--$3.0$ to expose dark knowledge, and factual teachers from $T_k \approx 1.5$ to balance confidence and coverage.

Second, \emph{capability-based weighting} assigns weights based on teacher strengths. Higher $w_k$ values should be assigned to teachers excelling at the target task, elevated weights to safety teachers in sensitive domains, and balanced weights for complementary specializations.

Third, practitioners should prefer the \emph{mixture objective} $\mathcal{L}_{\text{KD}}^{\text{mix}}$ over $\mathcal{L}_{\text{KD}}^{\text{multi}}$, as it provides faster convergence due to the Jensen lower bound, avoids conflicting teacher constraints, and produces a single unified target distribution.

Fourth, \emph{variance reduction} is maximized by using diverse teachers with different training sources, architectures, and optimization objectives, thereby maximizing benefit from uncorrelated errors.

Finally, \emph{safety priority weighting} in sensitive contexts involves increasing $w_{k^*}$ for safety-aligned teachers, maintaining sharp refusal signals via low $T_{k^*}$, and exploiting safety attenuation properties (Corollary after Proposition~\ref{prop:attenuation}).

\paragraph*{Clarification on diversity and bias.}
Using teachers with diverse data sources, objectives, and inductive biases maximizes both ensemble variance reduction and attenuation of \emph{supervisory bias} in the resulting target distribution. Here, \emph{bias} refers to bias in the supervisory signal induced by heterogeneous teacher priors (and their failure modes), rather than representational or normative-alignment bias inside the student. Correlated teacher errors and shared blind spots cannot be eliminated by averaging alone, but are explicitly mitigated by deliberate teacher diversity and conservative weighting of highly correlated teachers. Questions of inner-alignment and adversarial robustness are orthogonal concerns and are intentionally out of scope for this axiomatic framework.

\section{Theoretical Landscape: What the Axioms Enable}

The five axioms (1--5) establish a mathematical framework with several important properties. The framework \emph{includes multiple valid implementations}, such as linear convex combinations, geometric mean projections, entropic regularization methods, and information-theoretic projections. At the same time, it \emph{excludes degenerate cases} including identity-only operators, discontinuous transforms, operators violating normalization, and operators without temperature coherence.

The axioms \emph{enable key theoretical results}: variance reduction (Theorem~\ref{thm:variance}, under Assumption~L), Jensen's inequality bound (Theorem~\ref{thm:jensen}), log-loss performance guarantee (Theorem~\ref{thm:logloss}), safety attenuation (Corollary after Proposition~\ref{prop:attenuation}), and meta-teacher capacity bounds (Proposition~\ref{prop:capacity}).

A fundamental property of this framework is \emph{operator non-identifiability}: no combination of Axioms~1--5 uniquely determines an implementation. Multiple distinct operator families satisfy all axioms, each with different mathematical structure.

\section{Comparison to Single-Teacher KD}

Multi-teacher ensemble distillation provides several benefits unavailable in single-teacher settings, summarized in Table~\ref{tab:comparison}. Single-teacher KD offers no variance reduction since knowledge comes from a single source, whereas multi-teacher ensembles achieve automatic variance reduction via averaging (Theorem~\ref{thm:variance}). Single-teacher approaches cannot leverage complementary knowledge from heterogeneous specializations, and safety inheritance is limited to one teacher's bias rather than the strong safety attenuation achieved through convex aggregation (Corollary after Proposition~\ref{prop:attenuation}). Performance bounds differ fundamentally: single-teacher KD can at best match the teacher, while multi-teacher students can exceed the average teacher log-loss (Theorem~\ref{thm:logloss}). Temperature flexibility is restricted to one parameter in single-teacher settings but allows per-teacher heterogeneous scaling under Axiom~5. Finally, single-teacher KD inherits the teacher's blind spots, whereas multi-teacher ensembles can cancel uncorrelated errors across diverse failure modes.

\section{Conclusion}

We have established an axiomatic framework for multi-teacher ensemble knowledge distillation, characterized by five core axioms (1--5) defining the mathematical properties of valid aggregation operators. Key results include existence and non-uniqueness theorems showing that multiple distinct operator families satisfy the axioms; variance reduction guarantees demonstrating that ensemble averaging reduces prediction noise (under Assumption~L for linear operators, and qualitatively for non-linear operators); Jensen bounds establishing that the mixture objective is easier to optimize than sum-of-KLs; performance guarantees showing students can exceed average teacher log-loss; safety attenuation properties ensuring convex combination moderates extreme probabilities; and capacity requirements establishing that high-parameter students can act as meta-teachers.

All theoretical guarantees are operator-agnostic, holding for any implementation satisfying Axioms~1--5 (with Assumption~L required for the exact variance reduction bound). This enables rigorous theoretical foundations while admitting multiple valid implementations. Generalization behavior depends on architectural inductive bias, regularization, and optimization dynamics, and is intentionally decoupled from the aggregation axioms analyzed here.

The framework naturally complements single-teacher sparse distillation~\cite{cho2019efficacy} and provides theoretical justification for training unified student models from diverse frontier teachers. While this work treats teacher weights as exogenous parameters, adaptive and data-driven weight selection mechanisms---based on task performance, uncertainty, or safety signals---can be layered atop the axiomatic framework and are explored in follow-on work. Taken together, this framework formalizes multi-teacher ensemble distillation as a principled mechanism for reducing variance-driven instability and attenuating supervisory bias, while remaining operator-agnostic and implementation-flexible.

\section*{Acknowledgements}

The authors gratefully acknowledge the collaborative environment at SparseTech that made this research possible. The theoretical and computational developments presented in this paper are part of an ongoing SparseTech research initiative on multi-teacher ensemble distillation for large language models. Patent Pending.

\bibliographystyle{plain}
\bibliography{../sparsetech_references}

\end{document}